\title{A Verifier Hierarchy}
\author[1]{Maurits Kaptein\corref{cor1}}
\ead{m.c.kaptein@tue.nl}
\address[1]{Department of Mathematics and Computer Science, University of Eindhoven, De Zaale 1, 5600MB, Eindhoven, the Netherlands}
\date{\today}
\newcommand{\np}{\textsf{NP}\xspace}
\newcommand{\p}{\textsf{P}\xspace}
\newcommand{\exptime}{\textsf{EXPTIME}\xspace}
\newcommand{\SAT}{\textsc{3-SAT}\xspace}
\newcommand{\poly}[1][]{\mathit{poly}#1}
\newcommand{\period}{\textsc{PERIODIC}\xspace}
\newcommand{\rotate}{\textsc{STRING-ROTATION}\xspace}
\xpatchcmd{\proof}{\itshape}{\bfseries}{}{}
\theoremstyle{plain}
\newtheorem{theorem}{Theorem}[section]
\newtheorem{lemma}[theorem]{Lemma}
\newtheorem{corollary}[theorem]{Corollary}
\theoremstyle{definition}
\newtheorem{definition}{Definition}[section]
\newtheorem{example}[definition]{Example}
\newtheorem{remark}[definition]{Remark}
\begin{document}

\begin{abstract}
We investigate the trade-off between certificate length and verifier runtime. We prove a Verifier Trade-off Theorem showing that reducing the inherent verification time of a language from \(f(n)\) to \(g(n)\), where \(f(n) \ge g(n)\), requires certificates of length at least \(\Omega(\log(f(n) / g(n)))\). This theorem induces a natural hierarchy based on certificate complexity. We demonstrate its applicability to analyzing conjectured separations between complexity classes (e.g., \(\np\) and \(\exptime\)) and to studying natural problems such as string periodicity and rotation detection. Additionally, we provide perspectives on the \(\p\) vs. \(\np\) problem by relating it to the existence of sub-linear certificates.
\end{abstract}

\begin{keyword}
Computational complexity \sep Certificate complexity \sep Hierarchy theorems \sep Verification paradigms.
\end{keyword}

\maketitle

\section{Introduction}
Hierarchy theorems have long been indispensable tools for separating complexity classes and understanding computational hardness. The deterministic time hierarchy theorem \citep{hartmanis1965computational} established the strict separation of polynomial time \(\p\) from exponential time \(\exptime\), while space hierarchy theorems \citep{stearns1965hierarchies} delineate distinct space complexity classes. Similarly, nondeterministic time hierarchy theorems \citep{cook1972hierarchy, even1982note,fortnow2000time} reveal subtler separations for nondeterministic Turing machines (NTMs). These results underscore the foundational role of hierarchies in complexity theory.

In this work, we introduce a novel hierarchy theorem that bridges deterministic and nondeterministic computation through certificate complexity. Building on the equivalence of NTMs and deterministic verifiers for \(\np\) \citep[as reprinted in][]{cook2023complexity}, we establish a \emph{hierarchy of verifiers} where certificate length governs computational speed-ups. Formally, suppose that for a language \(L\) and two certificate-length functions \(b_1(n)<b_2(n)\) we have:
\begin{itemize}
  \item Every verifier for \(L\) using \(b_1(n)\)-bit certificates requires at least \(\Omega(f(n))\) time (the \emph{inherent} verification time).
  \item There exists a verifier for \(L\) using \(b_2(n)\)-bit certificates that runs in \(O(g(n))\) time,
\end{itemize}
with \(f(n)\ge g(n)\). We prove in Theorem~\ref{thm:tradeoff} that
\[
  b_2(n) - b_1(n) = \Omega\left(\log\frac{f(n)}{g(n)}\right).
\]
This captures the trade-off: reducing a language’s inherent verification time from \(f(n)\) to \(g(n)\) forces an additive certificate-length increase of \(\Omega\left(\log(f/g)\right)\), yielding a strict hierarchy of verifier complexity.

Anchoring our analysis in the verifier-based view of \np, we illustrate three applications of the Verifier Trade-off Hierarchy:
\begin{itemize}
  \item \textbf{Hierarchy Alignments.}  
    We demonstrate how our theorem aligns with classic separations like \(\p \subsetneq \exptime\). For example, any language in \exptime with a polynomial-time verifier requires super-polynomial certificates.
  \item \textbf{Case Studies on Natural Problems.}  
    We analyze two canonical string-manipulation languages, \period and \rotate, deriving bounds on the certificate length vs. verification time trade-off.
  \item \textbf{Certificate-Size Lens on \p vs. \np.}  
    We recast the \p vs. \np question in terms of certificate length:  
    \begin{itemize}
      \item Linear certificates for \np-complete languages align with the classical definition of \np.
      \item Sub-linear certificates would collapse \(\p\) and \(\np\), suggesting \(\p = \np\).  
    \end{itemize}
    We concretize the latter by analyzing \(\SAT\), highlighting certificate size as a critical resource.
\end{itemize}

In the remainder of this paper, we first briefly review related work (Section \ref{sec:related}) and then introduce notation and definitions (Section \ref{ref:preliminaries}). Section \ref{sec:hierarchy} presents our main Verifier Trade-off Hierarchy Theorem and corollaries. In Section \ref{sec:applications} we discuss implications, analyze natural problems, and explore complexity class separations. Section \ref{sec:comments} provides pointers for future work.

\section{Related Work}
\label{sec:related}

Classical hierarchy theorems delineate separations within computational models. For example, the deterministic time hierarchy theorem \citep{hartmanis1965computational} establishes that sufficiently increasing time strictly enlarges computable classes (e.g., \(\p \subsetneq \exptime\)). Analogous hierarchies for NTMs require super-polynomial time gaps, reflecting the subtler structure of nondeterminism \citep{seiferas1978gap}. These results, while foundational, focus on separations within paradigms. Our work bridges across them: we quantify how certificate size mediates the computational advantage of verification over solving, complementing hierarchy theorems with a dimension orthogonal to time/space.  

The definition of \(\np\) via polynomial-time verifiers \citep{cook2023complexity} inspired our certificate-centric approach. Subsequent verification paradigms—interactive proofs \citep{goldwasser2019knowledge}, probabilistically checkable proofs (PCPs) \citep{arora1998proof}, and automatizability of proof systems \citep{Atserias2004Automatizability}—shifted focus to distributed, randomized, or heuristic verification. For instance, \citep{Atserias2004Automatizability} showed that efficient automatizability of resolution proofs would imply unexpected circuit lower bounds, highlighting the interplay between proof length and computational hardness. In contrast, our framework isolates certificate size as the critical resource, rather than interaction or randomness, mirroring classical hierarchy intuitions in a new dimension.  

Non-uniform computation augments deterministic machines with advice strings fixed per input length \citep{karp1982turing,allender2000complexity}, while certificate complexity \citep{cook2023complexity} measures minimal proof lengths for Boolean functions \citep[see, e.g.,][]{chaubal2021diameter}. Our work diverges by treating certificates as efficiency parameters governing possible language verification speed-ups, rather than static advice or worst-case bounds. This aligns with circuit-based approaches like \citep{Kannan1993Toda}'s proof of Toda’s theorem \citep{Kannan1993Toda}, which leverages circuit complexity to ascend the polynomial hierarchy, but with a focus on dynamic proofs rather than fixed circuit families.  

Modern complexity theory emphasizes fine-grained lower bounds under conjectures like the Exponential Time Hypothesis (\(\mathsf{ETH}\)) \citep{impagliazzo2001complexity,chen2020exponential} and tools from fine-grained complexity \cite[e.g.,][]{williams2018fine}. These results often assume hardness to derive precise time constraints (e.g., for \(\SAT\)). Our hierarchy theorem complements this by deriving certificate size lower bounds from structural problem properties. For example, connections between combinatorial games and proofs, as in \citep{Atserias2011MeanPayoff}'s work on mean-payoff games and propositional proofs \citep{Atserias2011MeanPayoff}, reveal how problem structure constrains proof complexity—a theme our work extends to certificate-mediated verification.  


\section{Preliminaries}
\label{ref:preliminaries}

In this section, we establish the formal framework necessary for analyzing the relationship between certificate size and computational speed-up. We begin by defining the computational models under consideration and subsequently we introduce the remainder of the notation and definitions that will be used throughout the paper.

\subsection{Defining Solvers and Verifiers}

We consider languages over the finite alphabet \(\Sigma = \{0,1\}\). For a string \(x \in \Sigma^*\), let \(|x|\) denote its length. We use \(\Sigma^*\) to denote the set of all finite-length strings over \(\Sigma\) (including the empty string \(\varepsilon\)), and \(\Sigma^+\) to denote the set of nonempty strings over \(\Sigma\), i.e.,  \(\Sigma^+ = \Sigma^* \setminus \{\varepsilon\}\).
A \emph{language} \(L\) is a subset of \(\Sigma^*\).

\begin{definition}[Certificate-Based Deterministic Verifier]  
A \emph{certificate-based verifier} \(V\) for a language \(L\) is a deterministic Turing machine that takes as input a pair \((x, w)\), where \(x \in \Sigma^*\) is the instance and \(w \in \{0,1\}^*\) is a certificate. For all \(x \in L\), there exists a certificate \(w\) such that \(V(x, w) = 1\) (acceptance). For all \(x \not\in L\) and all certificates \(w\), \(V(x, w) = 0\) (rejection).  
\end{definition}  

\begin{definition}[Certificate Length]  
For a verifier \(V\) and input length \(n\), the \emph{certificate length} \(b(n)\) is the maximum length of any certificate \(w\) required for inputs \(x \in L\) with \(|x| = n\).  
\end{definition}  

\begin{definition}[Deterministic Solver]  
A \emph{deterministic solver} \(S\) for \(L\) is a deterministic Turing machine that, on input \(x \in \Sigma^*\), outputs \(S(x) = 1\) if \(x \in L\) and \(S(x) = 0\) otherwise. Equivalently, \(S\) is a verifier with certificate length \(b(n) = 0\). Solvers are closed under complement: if \(S\) decides \(L\) in time \(T_S(n)\), then its complement \(L^c\) is decidable in \(T_S(n)\) by flipping \(S\)’s output.  
\end{definition}  

\begin{definition}[Time Complexity]  
For a Turing machine \(M\), the \emph{time complexity} \(T_M(n)\) is the maximum number of steps \(M\) takes over all inputs \(x\) with \(|x| = n\). All time bounds \(f(n), g(n)\) are assumed to be \emph{time-constructible} (i.e., computable in \(O(f(n))\) time).  
\end{definition}  

Verifiers and solvers are modeled as multi-tape deterministic Turing machines:  
\begin{itemize}  
    \item The input \(x\) resides on a read-only input tape.  
    \item The certificate \(w\) (if present) is on a separate read-only certificate tape.  
    \item Work tapes are used for computation.  
    \item A write-once output tape records acceptance/rejection.  
\end{itemize}  
Unless stated otherwise, we assume this multi-tape model. 

\subsection{Additional notation}

We use standard asymptotic notation:  
\begin{itemize}  
    \item \(f(n) = O(g(n))\): \(\exists c > 0, n_0 \in \mathbb{N}\) such that \(f(n) \leq c \cdot g(n)\) for all \(n \geq n_0\).  
    \item \(f(n) = \Omega(g(n))\): \(\exists c > 0, n_0 \in \mathbb{N}\) such that \(f(n) \geq c \cdot g(n)\) for all \(n \geq n_0\). 
    \item \(f(n) = \Theta(g(n))\) means \(f(n) = O(g(n))\) and \(f(n) = \Omega(g(n))\). 
\end{itemize}  
\noindent
Furthermore, for a function \( f(n) \), the notation \( \poly(n) \) denotes the set of all functions bounded by a polynomial in \( n \), i.e., 
    \[
    \poly(n) = \bigcup_{k \geq 1} O(n^k),
    \]
\noindent
where \( k \) is a constant. The shorthand \( \poly \) (without arguments) represents an unspecified polynomial function.

Throughout the text we explicitly consider the following complexity classes:
\begin{itemize}  
    \item \(\p\): Languages decidable by deterministic Turing machines in polynomial time:  
    \[  
    \p = \bigcup_{k \geq 1} \mathsf{DTIME}(n^k),  
    \]  
    where \(\mathsf{DTIME}(f(n))\) denotes the class of languages decidable in \(O(f(n))\) time.  
      
    \item \(\np\): Languages verifiable in polynomial time with polynomial-length certificates:  
    \[  
    L \in \np \iff \exists \text{ verifier } V \text{ with } T_V(n) = \poly(n) \text{ and } b(n) = \poly(n).  
    \]  
      
    \item \(\exptime\): Languages decidable in exponential time:  
    \[  
    \exptime = \bigcup_{k \geq 1} \mathsf{DTIME}\left(2^{n^k}\right).  
    \]  
\end{itemize}  

\subsection{Language-Level time bounds}
\label{subsec:language}

In the statement of Theorem~\ref{thm:tradeoff} we discuss verification times of languages. In the remainder, such statements are intended to refer to inherent properties of the language, not to any specific algorithm:

 \begin{definition}[Language‐Level Time Bounds]
 \label{def:language-level}
Let \(L\subseteq\Sigma^*\) be a language, and \(b(n)\) a function bounding certificate length.
\begin{itemize}
  \item We say \emph{\(L\) has verification time \(\Omega(f(n))\) with \(b(n)\)-bit certificates} if \emph{every} deterministic, certificate‐based verifier for \(L\) that uses at most \(b(n)\) bits must run in \(\Omega(f(n))\) time.
  \item We say \emph{\(L\) has verification time \(O(g(n))\) with \(b(n)\)-bit certificates} if there \emph{exists} a deterministic, certificate‐based verifier for \(L\) that uses at most \(b(n)\) bits and runs in \(O(g(n))\) time.
\end{itemize}
\end{definition}

\subsection{Fundamental Relationships}

A verifier \(V\) with certificate length \(b(n)\) and verification time \(T_V(n)\) can be simulated by a solver enumerating all \(2^{b(n)}\) certificates, yielding a time complexity of \(O(2^{b(n)} \cdot T_V(n))\). This exponential overhead raises a central question: what is the minimal certificate length \(b(n)\) required to achieve a given speed-up over deterministic solvers? The \hyperref[thm:tradeoff]{Verifier Trade-off Theorem} answers this by establishing a tight trade-off between certificate size and computational advantage.

\section{The Verifier Trade-Off Hierarchy}  
\label{sec:hierarchy}  

We formalize the relationship between certificate length and verification efficiency. We re-emphasize that our focus is on language level time bounds, as defined in section \ref{subsec:language}. The following theorem establishes a fundamental trade-off: any reduction in language-level verification time necessitates a proportional increase in certificate length.  

\begin{theorem}[Verifier Trade-off Theorem]
\label{thm:tradeoff}
Let \( L \subseteq \Sigma^* \) be a language with two verifiers:  
\begin{enumerate}  
    \item \( V_1 \): Uses at most \( b_1(n) \)-bit certificates and has (language-level) verification time \( \Omega(f(n)) \) (Def.~\ref{def:language-level}).  
    \item \( V_2 \): Uses at most \( b_2(n) \)-bit certificates (where \( b_2(n) \geq b_1(n) \)) and has verification time \( O(g(n)) \).  
\end{enumerate}  
If \( f(n) \geq c \cdot g(n) \) for some constant \( c > 1 \) and all sufficiently large \( n \), then:  
\[
b_2(n) - b_1(n) = \Omega\left(\log\frac{f(n)}{g(n)}\right).
\]  
\end{theorem}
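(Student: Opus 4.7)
The plan is to use the natural enumeration argument: convert the fast, long-certificate verifier $V_2$ into a short-certificate verifier $V'$ by brute-forcing the ``extra'' $b_2(n)-b_1(n)$ bits, and then play the running time of $V'$ off against the hypothesized $\Omega(f(n))$ lower bound on every $b_1(n)$-certificate verifier. Concretely, I would define $V'$ as follows: on input $(x,w)$ with $|w|=b_1(n)$, iterate over all $2^{b_2(n)-b_1(n)}$ bit strings $w'$ of length $b_2(n)-b_1(n)$, write $w\cdot w'$ on the certificate tape, and simulate $V_2(x,w\cdot w')$; accept iff some simulation accepts. Correctness is immediate from the verifier semantics: for $x\in L$ a full $b_2(n)$-bit accepting certificate $w^{*}$ (padding if necessary) splits as a $b_1(n)$-prefix $w$ and a suffix $w'$ that the enumeration will eventually hit, while for $x\notin L$ no branch can accept because $V_2$ rejects every pair. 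Accounting for the counter and the $2^{b_2(n)-b_1(n)}$ invocations of $V_2$ yields $T_{V'}(n)=O\!\bigl(2^{b_2(n)-b_1(n)}\cdot g(n)\bigr)$.

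Next I would invoke the language-level hypothesis. Since $V'$ is a deterministic certificate-based verifier for $L$ using at most $b_1(n)$-bit certificates, Definition~\ref{def:language-level} forces $T_{V'}(n)=\Omega(f(n))$. Combining this lower bound with the upper bound above gives $2^{b_2(n)-b_1(n)}\cdot g(n)\ \geq\ c'\cdot f(n)$ for some constant $c'>0$ and all sufficiently large $n$, whence
\[
  b_2(n)-b_1(n)\ \geq\ \log_2\!\frac{f(n)}{g(n)}\;-\;O(1).
\]
The hypothesis $f(n)\geq c\cdot g(n)$ with $c>1$ keeps $\log_2(f(n)/g(n))$ bounded below by a positive constant (and dominated the additive $O(1)$ slack once $f/g$ grows), yielding the claimed $b_2(n)-b_1(n)=\Omega(\log(f(n)/g(n)))$.

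I expect the only real difficulty to be bureaucratic rather than conceptual: I must check that the enumeration simulation conforms to the formal multi-tape certificate-verifier model of the preliminaries (separate read-only certificate tape, write-once output tape, a single certificate of length at most $b_1(n)$) so that it genuinely qualifies as a $b_1(n)$-bit verifier and the language-level lower bound applies to it; this in turn will lean on implicit time-constructibility of $b_1$ and $b_2$, in the same spirit as the constructibility hypothesis already imposed on $f$ and $g$. Beyond those implementation details, the proof is a clean composition of an upper bound from brute-force simulation with the assumed lower bound on short-certificate verification, and no clever gadgetry appears to be required.
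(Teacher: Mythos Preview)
Your argument is correct and shares the paper's core idea: brute-force the extra $\Delta(n)=b_2(n)-b_1(n)$ certificate bits by simulating $V_2$ on every extension, then pit the resulting $O(2^{\Delta(n)}g(n))$ upper bound against the assumed $\Omega(f(n))$ lower bound. The difference is in how the enumerating machine is cast. The paper builds a deterministic \emph{solver} $S$ that ``fixes a minimal valid certificate $w_1$ for $V_1$'' and then enumerates extensions $(w_1,d)$; this is somewhat awkward, since a solver has no certificate input and there is no a priori reason a valid $V_2$ certificate should begin with a valid $V_1$ certificate. Your framing instead builds a $b_1(n)$-bit \emph{verifier} $V'$ that takes an arbitrary $b_1(n)$-bit prefix as its certificate (with no reference to $V_1$'s semantics) and enumerates suffixes; this is cleaner, makes the appeal to Definition~\ref{def:language-level} immediate, and sidesteps the question of how the machine obtains $w_1$. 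After that point the arithmetic is identical.
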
  
\noindent
Hence, to achieve a speed-up factor of at least \( \Omega\left(\frac{f(n)}{g(n)}\right) \) in verification time (reducing from \( \Omega(f(n)) \) to \( O(g(n)) \)), the certificate length must increase by \( \Omega\left(\log\frac{f(n)}{g(n)}\right) \) bits. This quantifies the inherent trade-off between computational acceleration and certificate size. 

\begin{proof}  
Let \( L \subseteq \Sigma^* \) be a language with verifiers \( V_1 \) and \( V_2 \) as described. Let \( \Delta(n) = b_2(n) - b_1(n) \), and assume \( f(n) \geq c \cdot g(n) \) for some constant \( c > 1 \).  

For any input \( x \in L \), \( V_1 \) requires at least one valid certificate \( w_1 \in \{0,1\}^{b_1(n)} \). Since \( V_2 \) uses longer certificates, we decompose its certificates as \( w_2 = (w_1, d) \), where \( d \in \{0,1\}^{\Delta(n)} \) represents the additional bits.  To compare the verifiers’ efficiencies, we construct a deterministic solver \( S \) for \( L \):  
\begin{itemize}  
    \item On input \( x \), \( S \) fixes a minimal valid certificate \( w_1 \) for \( V_1 \).  
    \item \( S \) enumerates all \( 2^{\Delta(n)} \) possible extensions \( d \in \{0,1\}^{\Delta(n)} \), forming candidate certificates \( w_2 = (w_1, d) \).  
    \item For each \( w_2 \), \( S \) simulates \( V_2(x, w_2) \) for \( O(g(n)) \) steps.  
    \item \( S \) accepts \( x \) if any \( w_2 \) causes \( V_2 \) to accept.  
\end{itemize}  
\noindent
The solver \( S \) runs \( V_2 \) \( 2^{\Delta(n)} \) times, each taking \( O(g(n)) \) steps. Thus, \( S \)’s time complexity is:  
\[
T_S(n) = O\left(2^{\Delta(n)} \cdot g(n)\right).
\]  
However, since \( S \) is a deterministic solver for \( L \), it must take at least \( \Omega(f(n)) \) time to decide \( L \)—otherwise, \( V_1 \)’s language-level lower bound \( \Omega(f(n)) \) would be violated. Combining these:  
\[
O\left(2^{\Delta(n)} \cdot g(n)\right) \geq \Omega(f(n)).
\]  
Rearranging gives constants \( c_1, c_2 > 0 \) such that:  
\[
c_1 \cdot 2^{\Delta(n)} \cdot g(n) \geq c_2 \cdot f(n) \implies \Delta(n) \geq \log_2\left(\frac{c_2}{c_1} \cdot \frac{f(n)}{g(n)}\right).
\]  
This simplifies to \( \Delta(n) = \Omega\left(\log\frac{f(n)}{g(n)}\right) \), as required.  
\end{proof}  

\begin{remark}
The logarithmic dependence on \( f(n)/g(n) \) is asymptotically optimal. To see why, consider the information-theoretic and computational necessity of this scaling. First, by Shannon’s source coding theorem \citep{shannon1948mathematical}, distinguishing \( \Theta(f(n)/g(n)) \) distinct computational paths (each corresponding to a unique verification trajectory) requires at least \( \log_2(f(n)/g(n)) \) bits. Second, each additional certificate bit halves the search space for \( V_2 \), meaning \( \Delta(n) \) bits reduce the verification time by a \( 2^{\Delta(n)} \)-factor. If \( \Delta(n) \) were smaller than \( \Omega(\log(f(n)/g(n))) \), the solver \( S \) could contradict \( V_1 \)’s time lower bound by exploiting insufficient branching to cover all necessary paths. Thus, the bound is unavoidable.  
\end{remark}

\begin{corollary}[Speedup Upper Bound]
\label{cor:speedup-bound}
As before, let \( L \subseteq \Sigma^* \) be a language with two verifiers:
\begin{enumerate}
    \item \( V_1 \): Uses \( b_1(n) \)-bit certificates and (language-level) verification time \( \Omega(f(n)) \).
    \item \( V_2 \): Uses \( b_2(n) \)-bit certificates (\( b_2(n) \geq b_1(n) \)) and verification time \( O(g(n)) \).
\end{enumerate}
Then the verification time of \( V_2 \) is bounded by:
\[
g(n) = \Omega\left(\frac{f(n)}{2^{c \cdot (b_2(n) - b_1(n)}}\right),
\]
for some constant \( c > 0 \). Equivalently, the maximum possible speedup from increasing certificates by \( \delta(n) = b_2(n) - b_1(n) \) bits is at most exponential in \( \delta(n) \):
\[
\frac{f(n)}{g(n)} = O\left(2^{c \cdot \delta(n)}\right).
\]
\end{corollary}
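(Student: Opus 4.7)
The plan is to derive Corollary~\ref{cor:speedup-bound} as a direct algebraic consequence of Theorem~\ref{thm:tradeoff}; no new combinatorial or computational argument is needed. The theorem already establishes the key inequality $\delta(n) := b_2(n) - b_1(n) = \Omega(\log(f(n)/g(n)))$, and the corollary is simply the same bound solved for $g(n)$ (or equivalently for the ratio $f(n)/g(n)$) rather than for $\delta(n)$.

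First, I would unpack the $\Omega$-notation from the theorem: there exist constants $c' > 0$ and $n_0 \in \mathbb{N}$ such that for all $n \geq n_0$,
\[
\delta(n) \;\geq\; c' \cdot \log_2\!\frac{f(n)}{g(n)}.
\]
The hypothesis $f(n) \geq c \cdot g(n)$ with $c>1$ guarantees that the right-hand side is positive for large $n$, so the inequality is meaningful. Next, I would exponentiate and rearrange. Using monotonicity of $2^x$, the inequality is equivalent to $f(n)/g(n) \leq 2^{\delta(n)/c'}$, which gives the second form stated in the corollary with constant $c := 1/c'$:
\[
\frac{f(n)}{g(n)} \;=\; O\!\left(2^{c\cdot \delta(n)}\right).
\]
Multiplying through by $g(n)/2^{c\cdot\delta(n)}$ and absorbing the suppressed $\Omega$-constant from $V_1$'s language-level lower bound yields the first form, $g(n) = \Omega\!\bigl(f(n)/2^{c\cdot \delta(n)}\bigr)$.

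There is essentially no obstacle here, since the corollary is a notational rephrasing of Theorem~\ref{thm:tradeoff}, trading a lower bound on $\delta(n)$ in terms of the speedup $f/g$ for an upper bound on the speedup in terms of $\delta(n)$. The only points to check carefully are: (i) the constant $c$ in the corollary is the reciprocal of the hidden constant in the $\Omega$-bound of the theorem, so one must keep track of which constant is which; (ii) the exponentiation step preserves the direction of the inequality, which is immediate from monotonicity of $2^x$; and (iii) the hypothesis $f(n) \geq c\cdot g(n)$ for $c>1$ is needed to ensure $\log_2(f(n)/g(n))$ is bounded away from zero, so that the $\Omega$-bound of the theorem is nontrivial and the rearrangement is valid for sufficiently large $n$.
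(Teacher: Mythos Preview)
Your proposal is correct and follows essentially the same route as the paper: unpack the $\Omega$-bound of Theorem~\ref{thm:tradeoff} to obtain $\delta(n)\ge c'\log(f(n)/g(n))$, exponentiate, and rearrange to isolate $g(n)$ (respectively $f(n)/g(n)$). If anything, your treatment of the constants---explicitly setting $c:=1/c'$---is slightly more careful than the paper's, which silently renames the constant when passing from $2^{\delta(n)/c}$ to $2^{c\cdot\delta(n)}$.
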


\begin{proof}
From Theorem~\ref{thm:tradeoff}, there exists a constant \( c > 0 \) such that:
\[
b_2(n) - b_1(n) \geq c \cdot \log\frac{f(n)}{g(n)}.
\]
Exponentiating both sides:
\[
2^{b_2(n) - b_1(n)} \geq 2^{c \cdot \log\frac{f(n)}{g(n)}} = \left(\frac{f(n)}{g(n)}\right)^c.
\]
Rearranging for \( g(n) \):
\[
g(n) \geq \frac{f(n)}{2^{(b_2(n) - b_1(n))/c}} = \Omega\left(\frac{f(n)}{2^{c \cdot (b_2(n) - b_1(n))}}\right).
\]
The speedup factor satisfies:
\[
\frac{f(n)}{g(n)} \leq 2^{c \cdot (b_2(n) - b_1(n))} = O\left(2^{c \cdot \delta(n)}\right).
\]
\end{proof}

\begin{remark}[Model-Dependent Speedup Limitations]
\label{rem:model-limits}
The speedup upper bound in Corollary~\ref{cor:speedup-bound} assumes an optimal computational model (e.g., multi-tape Turing machines). In some case, for example in the case of more restrictive models like \emph{single-tape Turing machines}, inherent overheads (e.g., linear-time head movements) may prevent achieving the theoretical speedup \citep[see also,][]{arora2009computational}. For example, in Section~\ref{subsec:natural}, we show that for natural problems like \textsc{Periodic} and \textsc{Rotation} the theoretical bounds are indeed not achieved due to model-induced overheads. This highlights the distinction between \emph{information-theoretic} speedup (governed by certificate size) and \emph{computational} speedup (limited by model constraints).
\end{remark}

The derivation above permits a result directly relating deterministic solvers to verifiers. Corollary \ref{cor:solververifier}, which is a special case of the previous result where $b_1(n) =0$ formalizes this relationship. 

\begin{corollary}[Solver-Verifier Trade-off Corollary]
\label{cor:solververifier}
Let \( L \subseteq \Sigma^* \) be a language decidable by a deterministic solver \( S \) in (language-level) time \( \Omega(f(n)) \). If \( L \) has a verifier \( V \) with certificate length \( b(n) \) and verification time \( O(g(n)) \), then:
\[
b(n) = \Omega\left(\log\frac{f(n)}{g(n)}\right).
\]
\end{corollary}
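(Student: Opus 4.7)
The plan is to derive Corollary~\ref{cor:solververifier} as an immediate specialization of the Verifier Trade-off Theorem, exploiting the fact that the Preliminaries explicitly define a deterministic solver as a verifier whose certificate length is identically $0$. This identification is the key conceptual move: once made, no new machinery is required beyond Theorem~\ref{thm:tradeoff}.

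Concretely, I would set $V_1 := S$ with certificate bound $b_1(n) = 0$ and language-level time lower bound $\Omega(f(n))$. The hypothesis that the solver $S$ requires $\Omega(f(n))$ time in the language-level sense is precisely the statement of Definition~\ref{def:language-level} specialized to certificate length $0$, since solvers are exactly $0$-bit verifiers. I would then set $V_2 := V$ with $b_2(n) = b(n) \geq 0 = b_1(n)$ and verification time $O(g(n))$. Plugging these into Theorem~\ref{thm:tradeoff} yields
\[
b(n) - 0 \;=\; \Omega\!\left(\log\frac{f(n)}{g(n)}\right),
\]
which is exactly the claimed bound.

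The only hypothesis of Theorem~\ref{thm:tradeoff} that warrants separate attention is the condition $f(n) \geq c \cdot g(n)$ for some constant $c > 1$ and all sufficiently large $n$. If this fails, then $f(n) = O(g(n))$, whence $\log(f(n)/g(n)) = O(1)$, and the claimed bound $b(n) = \Omega(\log(f(n)/g(n)))$ degenerates to a constant lower bound that any nonnegative certificate length trivially satisfies. Thus the substantive content of the corollary lives exactly in the regime where the parent theorem applies, and there the theorem delivers the conclusion directly. I do not expect any genuine obstacle: the real work was done in establishing Theorem~\ref{thm:tradeoff}, and the corollary is a one-line instantiation that highlights the solver-versus-verifier special case as a design principle for the hierarchy.
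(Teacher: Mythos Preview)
Your proposal is correct and matches the paper's own proof exactly: the paper derives the corollary in one line by invoking Theorem~\ref{thm:tradeoff} with $b_1(n)=0$, using the identification of a solver as a zero-certificate verifier. Your additional remark handling the degenerate case where $f(n)=O(g(n))$ is a nice touch that the paper's primary proof omits.
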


\begin{proof}
This follows directly from \autoref{thm:tradeoff} by setting \( b_1(n) = 0 \).  
\end{proof}

Corollary \ref{cor:solververifier} permits an independent proof that is potentially instructive towards the proof of Theorem \ref{thm:tradeoff}.

\begin{proof}[Alternative Proof (Stand-Alone)]
Construct a deterministic solver \( S \) for \( L \) as follows:
\begin{itemize}
    \item \textbf{Input}: \( x \) of length \( n \).
    \item \textbf{Enumerate}: All \( 2^{b(n)} \) certificates \( c \in \{0,1\}^{b(n)} \).
    \item \textbf{Simulate}: For each \( c \), run \( V(x, c) \) for \( O(g(n)) \) steps.
    \item \textbf{Decide}: Accept \( x \) iff \( V \) accepts any \( c \).
\end{itemize}
\noindent
The total runtime of \( S \) is:
\[
O\left(2^{b(n)} \cdot g(n)\right).
\]
\noindent
Since \( S \) decides \( L \) and \( S \)’s runtime is at least \( \Omega(f(n)) \), there exist constants \( k_1, k_2 > 0 \) such that:
\[
k_1 \cdot 2^{b(n)} \cdot g(n) \geq k_2 \cdot f(n).
\]
\noindent
Rearranging and taking base-2 logarithms:
\[
b(n) \geq \log_2\left(\frac{k_2}{k_1} \cdot \frac{f(n)}{g(n)}\right) = \Omega\left(\log\frac{f(n)}{g(n)}\right).
\]
\end{proof}


\section{Applications and further examples}
\label{sec:applications}

This section explores several interpretations and applications of our main result, the \hyperref[thm:tradeoff]{Verifier Trade-off Theorem}, and its important consequence for comparing solver and verifier complexities, the \hyperref[cor:solververifier]{Solver-Verifier Trade-off Corollary}. First, we provide a general discussion on using certificates to accelerate computation. Then, we analyze the implications of our theorem for natural problems and complexity class separations.

\subsection{Certificate Size as a Computational Resource}  
\label{subsec:certresource}  

The \hyperref[thm:tradeoff]{Verifier Trade-Off Theorem} establishes a relationship between certificate size and verification time: reducing computation effectively requires investing bits in the certificate.  To illustrate, consider two verifiers for a language \( L \):  
\begin{itemize}  
    \item \( V_1 \): A baseline verifier with no certificate (\( b_1(n) = 0 \)) and language-level verification time \( f(n) \).  
    \item \( V_2 \): An optimized verifier with certificate length \( b_2(n) = \delta(n) \) and verification time \( g(n) \).  
\end{itemize}  
\noindent
By the theorem, the certificate size difference \( \delta(n) = b_2(n) - b_1(n) \) satisfies:  
\[
\delta(n) = \Omega\left(\log\frac{f(n)}{g(n)}\right).
\]  
Rewriting, this implies that the achievable \emph{speed-up factor} \( S(n) = \frac{f(n)}{g(n)} \) is upper-bounded by:  
\[
S(n) = O\left(2^{\delta(n)}\right).
\]  
In plain terms: each additional certificate bit can at most double the verification speed. Conversely, halving the time requires investing at least one more bit.

To visualize the trade-off, fix the input size \( n \) so that the baseline verifier’s time is a constant:  
\[
f(n) = 1024.
\]  
We now examine how verification time \( g(n) \) decreases as the certificate size \( \delta(n) \) increases. For fixed \( n \), the trade-off becomes:  
\[
g(n) = O\left(\frac{1024}{2^{\delta(n)}}\right).
\]  
\noindent
Figure \ref{fig:cert-tradeoff} plots this exponential upper bound (blue curve) and achievable values for integer \( \delta(n) \) (red dots). Figure \ref{fig:cert-tradeoff} demonstrates the exponential decrease in verification time as certificate size increases. This highlights that each additional certificate bit can halve the verification time, making certificates a powerful but costly resource. However, it's important to note that verification time cannot decrease below a constant, and certificate length is limited by the computational model, even though the blue curve visually approaches \( g(n) = 0 \) as \( \delta(n) \) increases.

\begin{figure}[ht]  
\centering  
\begin{tikzpicture}  
\begin{axis}[  
    xlabel={Certificate Bits (\( \delta(n) \))},  
    ylabel={Verification Time \( g(n) \)},  
    xmin=0, xmax=10,  
    ymin=1, ymax=1024,  
    axis lines=left,  
    grid=both,  
    title={Exponential Speed-Up via Certificate Bits (Fixed \( n \))},  
    legend pos=north east,  
    ymode=log,  
    log ticks with fixed point,  
]  
\addplot[domain=0:10, samples=11, thick, blue] {1024 / 2^x};  

\addplot[only marks, mark=*, mark size=2pt, red] coordinates {  
    (1, 512)  
    (2, 256)  
    (3, 128)  
    (4, 64)  
    (5, 32)  
    (6, 16)  
    (7, 8)  
    (8, 4)  
    (9, 2)  
    (10, 1)  
};  
\end{axis}  
\end{tikzpicture}  
\caption{  
    For fixed input size \( n \), each additional certificate bit (at most) halves the verification time. The blue curve shows the theoretical upper bound from the  \hyperref[thm:tradeoff]{Verifier Trade-Off Theorem}; red dots mark achievable values at integer \( \delta(n) \).  
}  
\label{fig:cert-tradeoff}  
\end{figure}
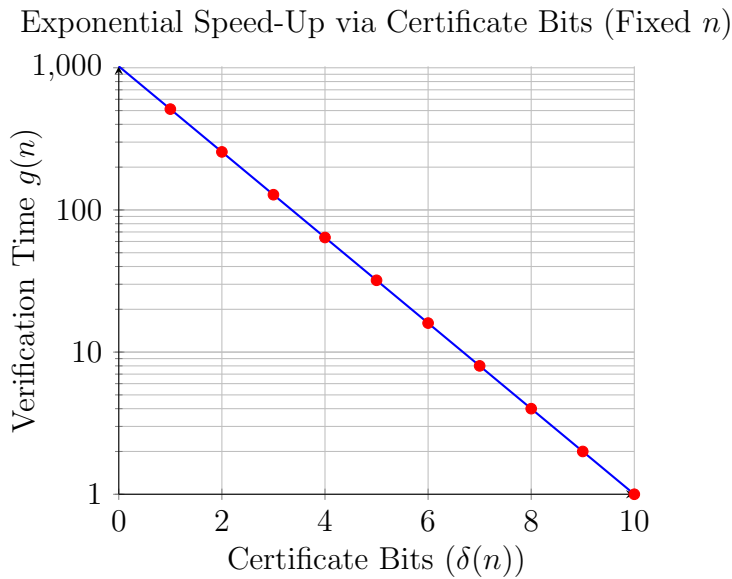  

Returning to general input size \( n \), the trade-off implies:
\[
\delta(n) \geq \log_2\left(\frac{f(n)}{g(n)}\right),
\]
which bounds the \emph{minimum certificate length} needed to achieve a desired speed-up. Table \ref{tab:cert-speed-tradeoff} summarizes the asymptotic relationship between certificate size and the resulting speed-up, providing concrete examples. These examples further illustrate that certificates act as a form of computational currency: each bit you add lets you buy a halving of the work the verifier must do. But the currency is expensive — logarithmic speed-ups are relatively cheap, while exponential savings require linear-sized certificates or more. This quantifies an intuitive trade-off in real-world algorithm design.

\begin{table}[ht]
\centering
\begin{tabular}{|c|c|c|}
\hline
\textbf{Certificate Size} & \textbf{Speed-Up}  & \textbf{Example} \\
\hline
\(\Omega(\log n)\) & Polynomial (\( O(n^c) \)) & \(f(n) = n^2, \ g(n) = n\) \\ 
\hline
\(\Omega(n)\) & Exponential (\( O(2^n) \)) & \(f(n) = 2^n, \ g(n) = \poly(n)\) \\ 
\hline
\(\Omega(n^k)\) & Super-exponential & \(f(n) = 2^{n^k}, \ g(n) = \poly(n)\) \\ 
\hline
\end{tabular}
\caption{Certificate-Speed Trade-Off (Asymptotic Cases). This table illustrates the relationship between certificate size  (\(\delta(n)\)) and the achievable speed-up in verification time (\(S(n) = \frac{f(n)}{g(n)}\)). It shows, for a solver with time complexity \(O(f(n))\), the speed-up factor for a verifier with time complexity \(g(n)\) when provided with a certificate of a given size.}
\label{tab:cert-speed-tradeoff}
\end{table}

\subsection{Natural Problems}
\label{subsec:natural}

In this section we explore the implications of the \hyperref[thm:tradeoff]{Verifier Trade-Off Theorem} for two well-studied natural problems: \period and \rotate. These problems, fundamental to string processing and algorithm design \citep[see, e.g.,][]{cormen2009introduction}), demonstrate how certificate complexity captures the trade-off between computation and information. 

\begin{example}[The \period Problem]
\label{ex:periodic}
Let \(\Sigma\) be a finite alphabet. The language \(\textsc{Periodic} \subseteq \Sigma^*\) consists of all strings \(x\) that are repetitions of a smaller period \(p\), i.e.,
\[
\textsc{Periodic} = \left\{ x \in \Sigma^* \mid \exists p \in \Sigma^+,\ k \geq 2 \text{ such that } x = p^k \right\}.
\]
On a multi-tape Turing machine (TM), a deterministic solver for \period might need to find the period. A naive approach of checking all possible period lengths $\ell$ from 1 to $n/2$ and verifying would take $O(n^2)$ time (since verifying each candidate period $\ell$ requires $O(n)$ time),\footnote{In fact, one can solve \textsc{Periodic} in $O(n\log n)$ time (or even $O(n)$ on a RAM) by computing the failure function of $x$ (Knuth–Morris–Pratt) \citep{cormen2009introduction}.} so we proceed under this assumption for our illustration. More efficient solvers might exist, but it's plausible that solving might take more than linear time without the period given and we proceed under this assumption for the sake of our discussion.

A verifier \(V\) for \period on a multi-tape TM can achieve \(O(n)\) time with a \(\Theta(\log n)\)-bit certificate (the period length \(\ell = |p|\)). The verifier:
\begin{enumerate}
\item Reads $\ell$ from the certificate.
\item Checks if $n \bmod \ell = 0$.
\item Verifies if $x[i] = x[i \bmod \ell]$ for all $i$ from 0 to $n-1$. This takes $O(n)$ time on a multi-tape TM by keeping one head on index $i$ and another on $i \bmod \ell$.
\end{enumerate}
\noindent
Here, the verifier achieves a time complexity of $O(n)$, which is likely faster than the time required for a solver to find the period from scratch on a multi-tape TM (which could be $O(n^2)$ with a simple approach or possibly better with more advanced algorithms, but still potentially super-linear in the context of just verification being linear).
\end{example}

The \period problem illustrates how certificate size captures the inherent trade-off between brute-force search and directed verification. Unlike the solver, which must exhaustively test \(O(n)\) periods, the verifier leverages a logarithmic-sized hint to avoid quadratic work. This aligns with the intuition that computational shortcuts via certificates are exponentially priced in advice size, as formalized by the theorem.

\begin{example}[The \rotate Problem]  
\label{ex:rotation}  
Let \(\Sigma\) be a finite alphabet. The language \(\rotate \subseteq \Sigma^* \times \Sigma^*\) consists of pairs \((A, B)\) where \(B\) is a cyclic rotation of \(A\). Formally:  
\[
\textsc{Rotation} = \left\{ (A, B) \mid \exists k \in [0, |A|) \text{ such that } B = A[k:] \cdot A[:k] \right\}.
\] 
\noindent
To illustrate, consider the following example using ASCII characters (understood to be encoded as binary strings under a standard encoding): the pair \((\texttt{abcde}, \texttt{cdeab})\) belongs to \(\rotate\) because 'cdeab' is a rotation of 'abcde'.

A deterministic solver \(S\) for \rotate on a multi-tape TM using again a \emph{naïve approach}, will:
\begin{enumerate}  
	\item Check if \(|A| = |B|\).  
	\item For each \(k \in [0, n-1]\):  
	\begin{itemize}
		\item Generate the candidate rotation \(B' = A[k:] \cdot A[:k]\).  
		\item Compare \(B'\) to \(B\) character-by-character.  
	\end{itemize}
\end{enumerate}
\noindent
Each rotation check takes \(O(n)\) time, and there are \(n\) rotations, hence the total time is:  
\[
T_S(n) = O(n^2).\footnote{Alternatively, one can decide \textsc{Rotation} in $O(n)$ time by checking if $B$ is a substring of $AA$, e.g., via KMP \citep{cormen2009introduction}.}
\]  

By contrast, a verifier \(V\) uses a \(\Theta(\log n)\)-bit certificate (the rotation index \(k\)):  
\begin{enumerate}  
	\item Read \(k\) (encoded in \(\lceil \log_2 n \rceil\) bits).  
	\item Verify \(|A| = |B| = n\).  
	\item For each \(i \in [0, n-1]\):  
	\begin{itemize}
		\item Compute \(j = (i + k) \bmod n\).  
		\item Check \(B[i] = A[j]\).  
	\end{itemize}
\end{enumerate}
\noindent
On a multi-tape TM, step 3 above uses two tape heads: one for \(B\) (fixed at position \(i\)) and one for \(A\) (moving to \(j\)). Additionally, each character comparison takes \(O(1)\) time giving a total time of:
\[
T_V(n) = O(n).
\]  
\noindent
By the Verifier Trade-off Theorem:  
\[
b(n) = \Omega\left(\log\frac{f(n)}{g(n)}\right) = \Omega\left(\log\frac{n^2}{n}\right) = \Omega(\log n).
\]  
\noindent
The certificate size \(\Theta(\log n)\) matches this bound.  

To conclude, the solver’s brute-force search requires \(O(n^2)\) time, while the verifier leverages the certificate to reduce the problem to direct comparisons, achieving \(O(n)\) time. This demonstrates the theorem’s core principle: certificates enable exponential speed-ups by compressing search effort into logarithmic advice. 
\end{example}

\subsection{Complexity Class Separations}
\label{subsec:classsep}

The \hyperref[thm:main]{Verifier Hierarchy Theorem} offers a novel framework for analyzing complexity class separations. We first demonstrate this idea by re-examining the relation between \np and \exptime \citep{arora2009computational, Papadimitriou1994}, providing a new perspective based on certificate-size trade-offs. We then explore the potential implications of the theorem for the \( \p \) vs. \( \np \) question. Finally, we discuss a conditional analysis of \SAT using the theorem.

\begin{example}[Certificate Size Lower Bound for \(\exptime\)-Complete Languages]
\label{ex:exptime}
Let \( L \) be an \(\exptime\)-complete language. Then:
\begin{itemize}
    \item \( L \in \exptime \): A deterministic solver \( S \) decides \( L \) in time \( f(n) = O(2^{p(n)}) \), for some polynomial \( p(n) \) of degree \( k \ge 1 \).
    \item \( L \) is \(\exptime\)-hard: All problems in \(\exptime\) reduce to \( L \) in polynomial time.
\end{itemize}
\noindent
Suppose \( L \in \np \). Then \( L \) has a polynomial-time verifier \( V \) with certificate length \( b(n) = \poly(n) \) and verification time \( g(n) = \poly(n) \). By the Solver-Verifier Trade-off Corollary:
\[
b(n) = \Omega\left(\log\frac{f(n)}{g(n)}\right) = \Omega\left(\log\frac{2^{p(n)}}{\poly(n)}\right) = \Omega(p(n) - \log \poly(n)) = \Omega(p(n)).
\]

\noindent
Thus, any verifier for \( L \) must use certificates of length at least \( \Omega(p(n)) \), where \( p(n) \) is the exponent in the exponential-time solver. This implies that for certain \(\exptime\)-complete languages, the verifier must read certificates longer than any fixed-degree polynomial — unless \(\np\) admits arbitrarily large polynomial certificate bounds.

While this does not give a formal contradiction, it shows that \(\exptime\)-complete languages require verifiers with unusually long certificates, distinguishing them structurally from typical \(\np\) languages. 
\end{example}

\begin{remark}[On \(\np \neq \exptime\)]
\label{rem:np-exptime}
Our analysis does not resolve the open problem \(\np \neq \exptime\)\footnote{Whether \(\np \subsetneq \exptime\) holds unconditionally remains a major open question in structural complexity; see, e.g., \citep{sipser2013, arora2009computational}.}. Instead, it demonstrates that if \exptime-complete problems \emph{require} super-polynomial certificates for polynomial-time verification—consistent with the conjecture \(\exptime \not\subseteq \np\)—then the Verifier Trade-off Theorem provides a novel lens through which to view this separation.  
\end{remark}

The \hyperref[thm:main]{Verifier Hierarchy Theorem} reveals a bidirectional connection between certificate size and computational hardness, suggesting two potential implications for investigating the \( \p \) vs. \( \np \) problem: 

\begin{example}[Implication 1: Certificate Efficiency and potential Complexity Collapse] 
Suppose there exists a polynomial-time verifier for an $\np$-complete language $L$ with certificate length $b(n)$. We analyze two regimes:
\begin{itemize}
    \item \textbf{Constant certificates ($b(n) = O(1)$}: 
    By the \hyperref[cor:solververifier]{Solver-Verifier Trade-off Corollary}, this implies:
    \[
    O(1) = \Omega\left(\log\frac{f(n)}{\poly(n)}\right) \implies f(n) = \poly(n).
    \]
    Thus, $L$ has a polynomial-time deterministic solver, implying $\p = \np$.

    \item \textbf{Sub-linear certificates ($b(n) = O(n^e)$ for $0 < e < 1)$}: 
    The corollary yields:
    \[
    O(n^e) = \Omega\left(\log\frac{f(n)}{\poly(n)}\right) \implies f(n) = 2^{O(n^e)} \cdot \poly(n).
    \]
    While the latter does not directly imply $\p = \np$, it contradicts the \emph{Exponential Time Hypothesis (ETH)} \citep{impagliazzo2001complexity}, which posits that $\mathsf{3\text{-}SAT}$ requires $2^{\Omega(n)}$ time. Sub-exponential solvers ($2^{o(n)}$) for $\np$-complete problems are incompatible with ETH. 
\end{itemize}
\noindent
Hence, the existence of constant certificates collapses $\p$ and $\np$, while sub-linear certificates would violate ETH. Both outcomes would challenge foundational conjectures, demonstrating that non-trivial certificate compression implies profound computational savings.
\end{example}

\begin{example}[Implication 2: Demonstrating \(\p \neq \np \) using certificate size]  
\label{ex:pathway2}
Let \(L\) be an \(\np\)-complete language. Suppose \emph{every} polynomial-time verifier for \(L\) requires certificates of size \(b(n) = \Omega(n)\). By the \hyperref[cor:solververifier]{Solver-Verifier Trade-off Corollary}, any deterministic solver for \(L\) must satisfy:  
\[
\Omega(n) = \Omega\left(\log\frac{f(n)}{\poly(n)}\right) \implies f(n) = 2^{\Omega(n)} \cdot \poly(n) = 2^{\Omega(n)}.
\]  
This implies \(L\) cannot be decided in polynomial time. Since \(L\) is \(\np\)-complete, this would prove \(\p \neq \np\).  
\end{example}

The examples above reinterpret \( \mathsf{P} \) vs. \( \mathsf{NP} \) through the lens of \emph{certificate efficiency}. Specifically, the existence of sub-linear certificates would imply that nondeterminism is relatively "inexpensive," potentially leading to the collapse of the hierarchy (\( \p = \np \)). Conversely, linear certificates would indicate that nondeterminism is "costly," preserving the separation.

These examples are not purely theoretical. The following example demonstrates that, under the assumption that \( \SAT \not\in \p \), \( \SAT \) requires linear certificates. The \( \SAT  \) problem is a cornerstone of computational complexity, serving as both a canonical \( \np \)-complete problem \citep{cook2023complexity} and a model for investigating phase transitions in computational hardness \citep{mertens2006number}. Assuming the widely held conjecture that \( \SAT \) is not solvable in deterministic polynomial time (i.e., \( \SAT \not\in \p \)). the next example demonstrates how the \hyperref[thm:tradeoff]{Verifier Trade-off Theorem} formalizes the relationship between certificate size and computational hardness. Specifically, the example shows that if \SAT is hard, its hardness is inherently tied to the \emph{certificate inefficiency} of \np)-complete problems.

\begin{example}[Certificate Complexity of \SAT]
\label{ex:sat-hardness}

We analyze the certificate complexity of \SAT under the widely held assumption that \( \SAT \not\in \p \). This assumption implies that any deterministic solver for \SAT requires super-polynomial time. We can use the \hyperref[cor:solver-verifier]{Solver-Verifier Trade-off Corollary} (Corollary 4.3) to determine the minimum certificate size required for a polynomial-time verifier. The high entropy of solutions for certain \SAT instances provides intuition for why large certificates might be necessary.

\begin{definition}[Adversarial \SAT Instance]
\label{def:adversarial}
For \(n \geq 1\), consider instances \(I_n\) of \SAT{} on \(n\) variables constructed adversarially. For example, instances near the satisfiability phase transition threshold (\(m/n \approx 4.26\)) \citep{selman1996generating} are known to be computationally hard and often possess a complex solution space.
\end{definition}

\begin{lemma}[Conditional Entropy Intuition]
\label{lem:entropy-intuition}
\emph{(Heuristic.)}  For hard \SAT instances \(I_n\), such as those constructed adversarially near the phase transition threshold, the entropy \(H(I_n)\) of the set of satisfying assignments is expected to be linear, i.e., \(H(I_n) = \Omega(n)\).\footnote{This heuristic is motivated by empirical studies of random 3-SAT at the critical ratio, which report \(\log_2 \mathbb{E}[N]\approx0.18n\) \citep{mertens2006number}.}
\end{lemma}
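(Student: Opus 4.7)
The plan is to interpret $H(I_n)$ as the Shannon entropy of the uniform distribution on the set $S(I_n)$ of satisfying assignments, so that $H(I_n) = \log_2 |S(I_n)|$. Under this reading, establishing $H(I_n) = \Omega(n)$ reduces to exhibiting a family of hard instances whose solution-count grows as $2^{\Omega(n)}$. I would frame the argument inside the random 3-SAT model at clause-to-variable ratio $\alpha = m/n$ near the satisfiability threshold $\alpha_c \approx 4.267$, since this is precisely the regime the footnote points to and where hardness is empirically most pronounced.

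The main step is a first-moment computation. For a uniformly random 3-CNF formula on $n$ variables with $m = \alpha n$ clauses, each of the $2^n$ assignments satisfies a given clause with probability $7/8$, so by linearity
\[
\mathbb{E}[N] \;=\; 2^{n}\,(7/8)^{\alpha n} \;=\; 2^{\,n\,(1 - \alpha \log_2(8/7))}.
\]
Evaluating at $\alpha = \alpha_c$ yields $\log_2 \mathbb{E}[N] \approx 0.18\,n$, matching the figure reported in the footnote. To upgrade this expectation-level bound to a per-instance statement, I would take one of two routes: (i) invoke second-moment or small-subgraph-conditioning arguments from the random $k$-SAT literature to concentrate $\log N$ around its mean below $\alpha_c$; or (ii) appeal to an explicit adversarial padding construction, taking $I_n$ to be the conjunction of a hard core formula on $n/2$ variables with a disjoint block of $n/2$ unconstrained variables, which trivially forces $|S(I_n)| \geq 2^{n/2}$ while preserving hardness of the core.

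The main obstacle is precisely the gap between \emph{expected} and \emph{per-instance} entropy, and between typical random and worst-case adversarial instances. It is known that near $\alpha_c$ the solution space shatters into exponentially many clusters \citep{mertens2006number}, so although $\mathbb{E}[N]$ is exponentially large, the realized $N$ on a given hard instance may fluctuate substantially, and sharp concentration holds only strictly below the threshold. A fully rigorous version would therefore require either a planted-solution model (in which $2^{\Theta(n)}$ solutions can be planted while retaining conjectured hardness) or the padding construction in (ii). Since the lemma is explicitly flagged as a heuristic, I would present the first-moment computation as the primary justification, cite the clustering picture from \citep{mertens2006number} for context, and point to the padding construction as a rigorous fallback whenever a worst-case guarantee is needed downstream.
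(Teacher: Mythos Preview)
Your proposal is correct and follows essentially the same approach as the paper: the paper's own proof sketch simply quotes the first-moment figure \(\log_2 \mathbb{E}[N]\approx 0.182n\) at the critical ratio \(\alpha\approx 4.26\) and then waves at the gap between random ensembles and worst-case instances with an informal remark about polynomial-time preprocessing. Your version is strictly more thorough---you actually derive the first-moment formula \(2^{n(1-\alpha\log_2(8/7))}\), flag the expectation-versus-instance gap explicitly, and offer both a concentration route and a padding construction as fallbacks---but the core heuristic argument is identical.
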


\begin{proof}[Proof Sketch]
Based on calculations for random \SAT at the critical threshold \(\alpha \approx 4.26\), the expected number of satisfying assignments \(\mathbb{E}[N]\) suggests an entropy lower bound \(\log_2 \mathbb{E}[N] \approx 0.182n = \Omega(n)\). While this applies to random ensembles, it provides strong intuition that specifically constructed hard instances can maintain high solution entropy. Intuitively, if \( \SAT \not\in \p \), no polynomial-time preprocessing could significantly compress this entropy on all instances without aiding a solver.
\end{proof}

Now, we apply the main result of this paper:

\begin{theorem}[Conditional Certificate Lower Bound via Trade-off Theorem]
\label{thm:sat-cert-lower}
Assuming \( \SAT \not\in \p \), any polynomial-time verifier \(V\) for \SAT requires certificates of length \(b(n) = \Omega(n)\).
\end{theorem}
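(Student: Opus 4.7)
The plan is to argue contrapositively: suppose SAT admits a polynomial-time verifier $V$ whose certificate length is strictly sub-linear, $b(n) = o(n)$, and derive a contradiction with $\SAT \notin \p$, using Lemma~\ref{lem:entropy-intuition} as the bridge. The outer shape of the argument is the \hyperref[cor:solververifier]{Solver-Verifier Trade-off Corollary} read in reverse, but the delicate step is bootstrapping the hypothesis $\SAT \notin \p$ (which alone only yields a super-polynomial time bound) into an essentially exponential lower bound of the form $f(n) = 2^{\Omega(n)}$, which is what makes the corollary return $\Omega(n)$ on the nose.

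First I would run the standard enumeration argument: the deterministic solver $S$ that simulates $V$ on every string in $\{0,1\}^{b(n)}$ decides SAT in time $O(2^{b(n)} \cdot \poly(n))$; under the working assumption $b(n) = o(n)$ this is $2^{o(n)} \cdot \poly(n)$, i.e., sub-exponential. Next I would appeal to Lemma~\ref{lem:entropy-intuition} on the adversarial family $\{I_n\}$: the claimed linear entropy of the satisfying-assignment set is used to argue that any deterministic decision procedure must, on these instances, effectively distinguish among $2^{\Omega(n)}$ witness configurations, which together with the hypothesis $\SAT \notin \p$ (ruling out cheap polynomial shortcuts that would compress this entropy) promotes the solver's runtime to $2^{\Omega(n)}$. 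Contrasting $T_S(n) = 2^{o(n)}$ with $T_S(n) = 2^{\Omega(n)}$ yields the contradiction, and hence $b(n) = \Omega(n)$.

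The main obstacle, and the reason the result is billed as conditional, is precisely this promotion step: Lemma~\ref{lem:entropy-intuition} is explicitly flagged as heuristic, and the bare assumption $\SAT \notin \p$ fed through Corollary~\ref{cor:solververifier} only delivers $b(n) = \omega(\log n)$, not $b(n) = \Omega(n)$. I would therefore structure the proof as an explicitly two-ingredient argument: $\SAT \notin \p$ rules out polynomial-time solvers, while Lemma~\ref{lem:entropy-intuition} supplies the missing exponential-time lower bound on adversarial instances that forces the certificate size all the way up to linear. The honest presentation is to let Theorem~\ref{thm:sat-cert-lower} inherit the heuristic status of Lemma~\ref{lem:entropy-intuition}, rather than claim a strictly rigorous implication from $\SAT \notin \p$ in isolation.
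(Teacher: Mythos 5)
You correctly identify the central gap: the hypothesis \(\SAT \notin \p\) by itself only forces \(f(n)\) to be super-polynomial, and pushing that through the Solver-Verifier Trade-off Corollary yields \(b(n) = \omega(\log n)\), not \(b(n) = \Omega(n)\). The paper's own proof has exactly this gap, but it handles it differently than you propose: rather than appealing to Lemma~\ref{lem:entropy-intuition}, the paper simply \emph{posits} that the fastest solver runs in time \(f(n) = 2^{\Omega(n^\epsilon)}\) (mislabelled ``conservative'') and then strengthens to \(f(n) = 2^{\Omega(n)}\) by invoking ETH outright. So where you try to manufacture the missing exponential lower bound from the entropy of the solution space, the paper imports it as an extra conjecture (ETH). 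Both routes silently upgrade the stated hypothesis, and both leave the theorem conditional on more than \(\SAT \notin \p\); you are commendably explicit about this, the paper less so.

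That said, your proposed bridge through Lemma~\ref{lem:entropy-intuition} does not actually work, even granting the lemma. The lemma asserts that adversarial instances \(I_n\) have \(\Omega(n)\) bits of entropy in their set of satisfying assignments. But a deterministic \emph{decider} for \SAT need not distinguish among satisfying assignments at all --- it only has to output one bit (satisfiable or not). High solution entropy does not bound decision time: the trivially satisfiable formula over \(n\) variables has maximum entropy \(n\) in its solution set yet is decidable in \(O(n)\) time. So the step ``linear entropy of the solution set \(\Rightarrow\) solver must take \(2^{\Omega(n)}\) time'' is not merely heuristic, it is false as stated. Entropy of the \emph{witness set} constrains the length of a certificate that \emph{names} a witness, not the runtime of a machine that merely decides membership. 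If you want a rigorous version of your promotion step, you would need a lower bound on solver time (e.g., ETH, which is what the paper uses), not a lower bound on solution-set entropy. Your instinct to flag the result as inheriting a non-rigorous status is sound; the precise reason is that \emph{some} exponential-time assumption must be added, and Lemma~\ref{lem:entropy-intuition} cannot substitute for one.
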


\begin{proof}
Let \(V\) be a polynomial-time verifier for \SAT. This means its verification time \(T_V(n) = g(n) = \text{poly}(n)\), and it uses certificates of length \(b(n)\).
The assumption \( \SAT \not\in \p \) implies that any deterministic solver \(S\) for \SAT requires super-polynomial time. Let the time complexity of the fastest possible solver be \(T_S(n) = f(n)\). Since \(\SAT \in \np \subseteq \exptime\), we know \(f(n)\) is at most exponential, but by assumption \(f(n)\) is super-polynomial. For instance, under the Exponential Time Hypothesis (ETH), \(f(n) = 2^{\Omega(n)}\). Let's assume conservatively that \(f(n) = 2^{\Omega(n^\epsilon)}\) for some \(\epsilon > 0\).

Applying the \hyperref[cor:solver-verifier]{Solver-Verifier Trade-off Corollary} (Corollary 4.3), which states \(b(n) = \Omega\left(\log \frac{f(n)}{g(n)}\right)\):
\begin{align*}
b(n) &= \Omega\left(\log \frac{f(n)}{g(n)}\right) \\
    &= \Omega\left(\log \frac{2^{\Omega(n^\epsilon)}}{\text{poly}(n)}\right) \\
    &= \Omega\left( \Omega(n^\epsilon) - \log(\text{poly}(n)) \right) \\
    &= \Omega(n^\epsilon).
\end{align*}
If we assume ETH (\(f(n)=2^{\Omega(n)}\)), the result is stronger:
\begin{align*}
b(n) &= \Omega\left(\log \frac{2^{\Omega(n)}}{\text{poly}(n)}\right) \\
    &= \Omega\left( \Omega(n) - O(\log n) \right) \\
    &= \Omega(n).
\end{align*}
Thus, under standard hardness assumptions for \SAT (like ETH implying \(f(n)\) is exponential), a polynomial-time verifier necessitates certificates of linear length. The entropy considerations in Lemma~\ref{lem:entropy-intuition} align with this conclusion, suggesting that the complexity quantified by the trade-off theorem reflects an underlying information-theoretic requirement for hard instances.
\end{proof}

\begin{corollary}[Conditional Separation \(\p \neq \np\)]
\label{cor:p-neq-np-from-sat}
If polynomial-time verifiers for \SAT require \(b(n) = \Omega(n)\) certificates (as implied by Theorem~\ref{thm:sat-cert-lower} under ETH), then \(\p \neq \np\).
\end{corollary}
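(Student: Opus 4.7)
The plan is to prove the contrapositive: assume $\p = \np$ and derive a contradiction with the hypothesis of the corollary. Since $\SAT$ is $\np$-complete, and in particular lies in $\np$, the assumption $\p = \np$ forces $\SAT \in \p$. By the definition of $\p$, there must then exist a deterministic Turing machine $S$ that decides $\SAT$ in some polynomial time $\poly(n)$.

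The decisive step is to invoke Definition~3.3, which states explicitly that a deterministic solver is the same object as a verifier with certificate length $b(n) = 0$. Consequently $S$ is itself a polynomial-time verifier for $\SAT$ whose certificates are empty. But the function $b(n) = 0$ is not in $\Omega(n)$, directly contradicting the hypothesis that every polynomial-time verifier for $\SAT$ requires certificates of length $\Omega(n)$. The contradiction yields $\p \neq \np$, as required.

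One could route the same conclusion through the Solver-Verifier Trade-off Corollary as a sanity check: if $S$ runs in time $f(n) = \poly(n)$ and a polynomial-time verifier runs in time $g(n) = \poly(n)$, then $\log(f(n)/g(n)) = O(\log n)$, so the corollary only demands $b(n) = \Omega(\log n)$, which is incompatible with the stronger hypothesis $b(n) = \Omega(n)$. Either route closes the argument in essentially one line, and there is no genuine mathematical obstacle at this stage. The real intellectual work lives upstream, in Theorem~\ref{thm:sat-cert-lower}: establishing the conditional premise $b(n) = \Omega(n)$ from ETH is what makes the hypothesis non-vacuous, after which the present corollary is a short logical repackaging rather than an independent difficulty.
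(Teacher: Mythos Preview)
Your proof is correct and follows essentially the same approach as the paper: assume \(\p = \np\), obtain a polynomial-time solver for \SAT, regard it as a verifier with \(b(n)=0\), and contradict the \(\Omega(n)\) certificate hypothesis. Your added sanity check via the Solver-Verifier Trade-off Corollary is extra commentary rather than a different route.
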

\begin{proof}
Assume for contradiction that \(\p = \np\). Then \(\SAT \in \p\), meaning there exists a polynomial-time solver \(S\) for \SAT. This solver \(S\) can be viewed as a verifier with certificate length \(b(n)=0\). This contradicts the requirement that \(b(n) = \Omega(n)\) from Theorem~\ref{thm:sat-cert-lower}. Therefore, the initial assumption must be false, and \(\p \neq \np\).
\end{proof}
\end{example}

\begin{remark}[Conditional Nature of Results]
The results presented in the last example are conditional upon the assumption that \( \SAT \not\in \p \). Although these results do not provide an unconditional proof of \( \p \neq \np \), they effectively demonstrate how certificate complexity, viewed as a measure of nondeterministic computational effort, offers an interesting perspective for analyzing complexity class separations.
\end{remark}

In conclusion, our final example reframes the \( \p \) vs. \( \np \) question in terms of \emph{certificate efficiency}: if \( \np \)-complete problems necessitate polynomial-sized certificates, even under hardness assumptions, then nondeterminism cannot be reduced to determinism without incurring an exponential overhead. This aligns with the prevailing intuition that the essence of \( \p \neq \np \) lies in the inherent \emph{cost} of verification shortcuts. The \hyperref[thm:tradeoff]{Verifier Trade-Off Theorem} emphasizes that certificate size is not merely a syntactic attribute but a fundamental \emph{computational} resource. Specifically, for \SAT, the exponential disparity between solver and verifier runtimes (under the assumption that \( \p \neq \np \)) directly implies the requirement for linear certificates, as demonstrated in Theorem \ref{cor:p-neq-np-from-sat}. This exemplifies how certificate complexity provides a unified framework for reasoning about computational hardness across deterministic and nondeterministic models.

\section{Comments}  
\label{sec:comments}  

The Verifier Trade-off Theorem establishes a rigorous relationship between certificate size and verification time, but its implications and limitations invite deeper reflection. At its core, the theorem quantifies a fundamental tension: reducing verification time from \(T(n)\) to \(T(n)/2^{k}\) requires \(\Omega(k)\) additional certificate bits, as shown by the bound \(\Delta(n) = \Omega\left(\log \frac{f(n)}{g(n)}\right)\). This creates a hierarchy where computational shortcuts via certificates are exponentially priced in advice size, offering a novel lens to compare classical complexity classes. 

However, several limitations constrain this framework. First, our results hold for worst-case complexity, whereas practical verification often exploits structured instances (e.g., SAT solvers leverage formula symmetry). Whether average-case certificate complexity aligns with worst-case bounds remains open. Second, the model assumes deterministic verification, excluding probabilistic or interactive paradigms like PCPs \citep{ben2008short}. Recent work extends probabilistically checkable proofs (PCPs) to real-number systems, demonstrating algebraic methods for verification in continuous domains \citep{baartse2017algebraic}. While our focus is on discrete certificates, such approaches highlight broader trade-offs between proof structure and verification efficiency. Extending the hierarchy to probabilistic verifiers, where randomness might reduce certificate size, poses a significant challenge. Third, conditional results (e.g., Theorem~\ref{cor:p-neq-np-from-sat}) depend on conjectures like \( \SAT \not\in \p \). While plausible, this leaves the framework contingent on unresolved assumptions.  

Central to future work is the question of tightness. For a function pair \(f(n) = 2^{2^n}\) and \(g(n) = O(1)\), our theorem mandates \(\Delta(n) = \Omega(2^n)\), but problem-specific structure (e.g., succinct representations of periodic strings \citep{jacobson1989space, he2010succinct}) might permit smaller certificates. Similarly, average-case analyses could refine our understanding: do typical instances of \rotate or \period require fewer bits than adversarial worst cases? Another direction involves probabilistic or quantum verification \citep{aaronson2023certified}. For example, quantum Merlin-Arthur (QMA) protocols \citep[see, e.g.,][]{marriott2005quantum} might achieve tighter trade-offs, compressing certificates further via quantum state encoding.  

Broader implications emerge at the intersection of complexity theory and applied computation. In parameterized complexity, kernelization compresses instances while preserving solvability—a process mirroring certificate generation \citep{downey2013fundamentals}. Our hierarchy’s certificate-time trade-off \(\left(b(n) = \Omega\left(\log \frac{f(n)}{g(n)}\right)\right)\) could inform kernel lower bounds, particularly for problems resistant to efficient preprocessing. Cryptographic proof systems, such as succinct non-interactive arguments (SNARGs) \citep{chiesa2014succinct}, also resonate with our framework: these systems prioritize minimal proof size for efficient verification, echoing the certificate-time duality.  

Ultimately, the hierarchy reinterprets computational hardness as a resource allocation problem. Just as time and space hierarchies delineate the cost of computation in their respective dimensions, certificate complexity formalizes the "price" of nondeterministic shortcuts. This reframing obviously does not resolve \p vs \np, but it potentially provides an interesting perspective: proving that \np-complete languages require \(\Omega(n)\)-bit certificates for polynomial-time verification would imply \( \p \neq \np \). 


\section{Declaration of generative AI and AI-assisted technologies in the writing process.}

During the preparation of this work the author(s) used several commercially available LLMs (including ChatGPT, Gemini, and Claude) in order to refine writing clarity, explore related literature, and verify proof concepts. After using these tools, the author(s) reviewed and edited the content as needed and take(s) full responsibility for the content of the published article.

\bibliographystyle{elsarticle-num}
\bibliography{references}

\end{document}